\newtheorem{theorem}{Theorem}[section]
\DeclarePairedDelimiter\ceil{\lceil}{\rceil}
\begin{document}



\title{SGL: \underline{S}pectral \underline{G}raph \underline{L}earning from   Measurements
}

\author{\IEEEauthorblockN{Zhuo Feng}
\IEEEauthorblockA{\textit{Department of Electrical and Computer Engineering} \\
\textit{Stevens Institute of Technology}\\
Hoboken, NJ, USA \\
zhuo.feng@stevens.edu}

}
\IEEEoverridecommandlockouts
\IEEEpubid{\makebox[\columnwidth]{978-1-6654-3274-0/21/\$31.00~\copyright2021 IEEE \hfill} \hspace{\columnsep}\makebox[\columnwidth]{ }}
\maketitle
\begin{abstract}
This work   introduces a highly-scalable spectral graph densification framework  for learning resistor networks with linear measurements, such as node voltages and currents. We prove that given $O(\log N)$ pairs of voltage and current  measurements, it is possible to recover  ultra-sparse  $N$-node resistor networks which  can well preserve the effective resistance distances on the graph. In addition, the learned graphs also preserve the structural  (spectral) properties of the original graph, which  can potentially be leveraged in many circuit design and optimization tasks. We show that the proposed graph learning approach is equivalent to solving the classical graphical Lasso problems with Laplacian-like precision matrices.   
Through extensive experiments for a variety of real-world test cases, we show that  the proposed approach is highly scalable for learning ultra-sparse resistor networks without sacrificing solution quality.
\end{abstract}

\begin{IEEEkeywords}
spectral graph theory, graph  Laplacian estimation, graphical Lasso, convex optimization, resistor networks
\end{IEEEkeywords}






\section{Introduction}\label{sect:introduction}
Recent years have seen a surge of interest in machine learning on graphs, with the goal of encoding high-dimensional data associated with nodes, edges, or (sub)graphs into low-dimensional vector representations that well preserve the original graph structural (manifold) information. Graph learning  techniques have shown promising results for various important applications such as vertex (data) classification, link prediction (recommendation systems), community detection, drug discovery, and electronic design automation (EDA) \cite{ wang2020gcn}. 

Modern graph learning  involves the following two key tasks: (1) graph topology learning for converting high-dimensional node feature (attribute) data into a  graph representation, and (2) graph embedding for converting graph-structured data (e.g. graph topology and node features) into low-dimensional vector representations. For example, an increasingly popular approach for analysing a data set in high-dimensional space (e.g. images of hand-written digits) is to first construct a graph connecting all the data points according to their similarities measured in distances in certain metric space \cite{dong2019learning}; next, graph embedding techniques are used to compute a low-dimensional vector representation of each data point (graph vertex), so that existing downstream machine learning or data mining algorithms can be conveniently applied.

However, {{even} the state-of-the-art  graph learning methods \cite{ egilmez2017graph, dong2019learning} do not scale comfortably to large data sets} due to their high algorithm complexity. For  example,  recent graph learning methods based on  Laplacian matrix estimation \cite{ egilmez2017graph, dong2019learning} have shown very promising performance. However, solving the required convex   optimization problem has a time complexity of {$O(N^2)$} per iteration for $N$ data points, which limits the application of these techniques to only very small data sets (e.g., with up to a few thousands data points). 

 For the first time, this paper introduces a spectral method for learning resistor networks from linear voltage and current measurements. Our approach is based on {a scalable spectral graph  densification algorithm} (SGL) for   estimation of   attractive Gaussian Markov Random Fields (GMRFs).
The proposed SGL algorithm  can efficiently solve  the graphical Lasso problem \cite{friedman2008sparse} with a Laplacian-like precision matrix  by iteratively including the most influential edges to dramatically reduce  spectral embedding distortions. A    unique property of the learned graphs  is that  {the spectral embedding or effective-resistance  distances on the constructed graph  will encode the similarities} between the original input data points (node voltage measurements).  To  achieve high efficiency, SGL exploits a scalable spectral graph   embedding scheme,  which allows each iteration to be completed in $O(N \log N)$ time, whereas existing state-of-the-art methods \cite{ egilmez2017graph,dong2019learning}    require at least $O(N^2)$ time for each iteration. Our analysis for sample complexity   shows that by leveraging the SGL algorithm  it is possible to  accurately estimate a sparse resistor network with only $O(\log N)$ voltage (and current) measurements (vectors).

The rest of this paper is organized as follows.    Section \ref{main_sec} introduces the proposed  spectral graph learning (SGL) framework is described in detail, which also includes the sample and algorithm complexity analysis.  Section
\ref{result_sec}  demonstrates extensive experimental results for learning a variety of real-world, large-scale graph problems, which is followed by the conclusion of this work in Section \ref{conclusion}.


 \section{SGL: A Spectral  Learning Approach}\label{main_sec}
 Suppose we are given $M$ linear measurements of  $N$-dimensional voltage and current vectors   stored in  data matrices $X\in {\mathbb{R} ^{N\times M}}$ and $Y\in {\mathbb{R} ^{N\times M}}$. The $i$-th column vector $X(:,i)$ corresponds to a  voltage response (graph signal) vector  due to the  $i$-th current excitation vector $Y(:,i)$. Motivated by recent graph learning research \cite{dong2019learning}, we propose a   scalable  spectral method (SGL) for    graph Laplacian   matrix  estimation  by exploiting the voltage ($X$) and current ($Y$) measurements.
 \subsection{Graph Learning via Laplacian Estimation}

Similar to the graphical Lasso problem \cite{friedman2008sparse}, the recent graph signal processing (GSP) based Laplacian estimation methods \cite{dong2019learning}  aim  to  learn  graph structures such that  graph signals will vary smoothly   across connected neighboring nodes  \cite{dong2019learning}.  To quantify the smoothness of a graph signal vector $x$ over a   undirected graph   $G=(V,E, w)$,  the following Laplacian quadratic form  can been adopted:
\begin{equation}\label{quad_form}
{x^\top}L x= \sum\limits_{\left( {s,t} \right) \in E}
{{w_{s,t}}{{\left( {x\left( s \right) - x\left( t \right)}
\right)}^2}},
\end{equation}
where $L=D-W$ denotes the graph Laplacian  matrix, $w_{s,t}=W(s,t)$  denotes  the weight for edge ($s,t$), while  $D$ and $W$ denote the  degree  and   the weighted adjacency matrices, respectively. The  GSP-based  graph learning  targets the following  convex optimization task \cite{dong2019learning}:
\begin{equation}\label{opt2}
  {\max_{\Theta}}:~ F= \log\det(\Theta)-  \frac{1}{M}Tr({X^\top \Theta X})-\beta {{\|\Theta\|}}^{}_{1},
\end{equation}
where   $\Theta={L}+\frac{I}{\sigma^2}$, ${L}$ denotes the set of valid graph Laplacian matrices, $Tr(\bullet)$ denotes the matrix trace, $I$ denotes the identity matrix, and  $\sigma^2>0$ denotes prior feature variance. In addition, $\|\bullet\|$ denotes the entry-wise $\ell_1$ norm, so ${\beta}{{\|\Theta\|}}^{}_{1} $ becomes the sparsity promoting regularization term. 
Since $\Theta={L}+\frac{I}{\sigma^2}$   is a symmetric and positive definite  (PSD) matrix (or M matrix) with non-positive off-diagonal entries, this formulation will lead to the estimation of  attractive GMRFs \cite{ dong2019learning, slawski2015estimation}.

\subsection{Gradient Estimation via  Sensitivity Analysis}\label{sec:theory}
We can express the graph Laplacian matrix as follows
\begin{equation}\label{LaplacianEdge}
L=\sum\limits_{\left( {s,t} \right) \in E}
{w_{s,t}}e_{s,t}e^\top_{s,t}
\end{equation}
where  ${e_{s}} \in \mathbb{R}^N$ denotes  the standard basis vector with all zero entries except for the $s$-th entry being $1$, and    ${e_{s,t}}=e_s-e_t$. Substituting (\ref{LaplacianEdge}) into   the  objective function $F$  in (\ref{opt2}),
and taking the partial derivative with respect to  $w_{s,t}$ leads to:
\begin{equation}\label{optF}
\frac{\partial F}{\partial w_{s,t} } = \sum\limits_{i=1}^N \frac{1}{\lambda_i+1/\sigma^2} \frac{\partial \lambda_i}{\partial w_{s,t} }
-  \frac{1}{M}\|X^\top e_{s,t}\|_2^2-4 \beta,
\end{equation}
where  the  eigenvectors corresponding to the ascending eigenvalues  ${\lambda _i}$ are denoted by ${u_i}$ for $i=1,...,N$, which satisfies:
\begin{equation}\label{eigen}
L u_i = \lambda_i u_i.
\end{equation}
Note that  the last two terms in (\ref{optF}) both imply   constraints on graph sparsity: including more edges will result in a greater trace $Tr({X^\top \Theta X})$. Consequently, we can safely  choose to set $\beta=0$ in the rest of this work, which will not impact the ranking of influential edges and thus the solution quality of the proposed SGL algorithm. 
\begin{theorem}\label{thm:pertub}
The spectral perturbation $\delta {\lambda _i} $ due to the inclusion of a candidate edge   $({s,t})$  can be  estimated by: 
\begin{equation}\label{sc}
 \delta {\lambda _i}=  \delta w_{s,t}\left( {{{u_i^\top}e_{s,t}}  } \right)^2.
\end{equation} 
\end{theorem}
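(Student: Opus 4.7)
The plan is to derive the formula by first-order matrix perturbation theory applied to the symmetric eigenvalue problem (\ref{eigen}). The key observation is that, by the edge-based representation (\ref{LaplacianEdge}), adding a single candidate edge $(s,t)$ with weight $\delta w_{s,t}$ changes the Laplacian by the rank-one symmetric update
\begin{equation*}
\delta L \;=\; \delta w_{s,t}\, e_{s,t} e_{s,t}^\top.
\end{equation*}
So the problem reduces to computing the first-order change of the $i$-th eigenvalue of $L$ under this rank-one perturbation.

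Next, I would write the perturbed eigenpair as $(\lambda_i + \delta\lambda_i,\, u_i + \delta u_i)$ and substitute into the eigenvalue equation, obtaining
\begin{equation*}
(L + \delta L)(u_i + \delta u_i) \;=\; (\lambda_i + \delta\lambda_i)(u_i + \delta u_i).
\end{equation*}
Using $L u_i = \lambda_i u_i$ and dropping the second-order products $\delta L\,\delta u_i$ and $\delta\lambda_i\,\delta u_i$, this simplifies to
\begin{equation*}
L\,\delta u_i + \delta L\, u_i \;=\; \lambda_i\, \delta u_i + \delta\lambda_i\, u_i.
\end{equation*}
Left-multiplying by $u_i^\top$, using $u_i^\top L = \lambda_i u_i^\top$ (so the $L\,\delta u_i$ term cancels against $\lambda_i \delta u_i$), and using the normalization $u_i^\top u_i = 1$, I obtain the standard identity $\delta\lambda_i = u_i^\top\, \delta L\, u_i$. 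Plugging in the rank-one form of $\delta L$ yields exactly (\ref{sc}).

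The computation itself is routine; the only delicate issue is the legitimacy of the first-order truncation. I would briefly note that since $\delta L$ has rank one and bounded operator norm $\delta w_{s,t}$, the Weyl-type bound guarantees $\|\delta u_i\| = O(\delta w_{s,t})$ whenever $\lambda_i$ is simple and separated from the rest of the spectrum, so the discarded terms are $O((\delta w_{s,t})^2)$ and the expression in (\ref{sc}) is the exact first-order sensitivity used throughout the paper. The eigenvalue-multiplicity caveat is the main place where some care is warranted, but for the use made in (\ref{optF}) — summing sensitivities weighted by $1/(\lambda_i+1/\sigma^2)$ — this is inessential because the sum $\sum_i u_i u_i^\top = I$ remains valid even when some eigenvalues coincide, making the contribution $\sum_i (u_i^\top e_{s,t})^2$ well-defined regardless.
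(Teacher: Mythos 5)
Your proof is correct and follows essentially the same route as the paper: first-order perturbation of the eigenvalue equation, truncation of second-order terms, and left-multiplication by $u_i^\top$ to isolate $\delta\lambda_i = u_i^\top\,\delta L\,u_i$ with $\delta L = \delta w_{s,t}\,e_{s,t}e_{s,t}^\top$. Your added remarks on the validity of the first-order truncation and the eigenvalue-multiplicity caveat are a welcome refinement the paper omits, but the core argument is identical.
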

\begin{proof} 
Consider the following spectral perturbation analysis:
\begin{equation}\label{formula_eig_perturb1}
\left( {L + \delta L} \right)\left( {{u_i} + \delta {u_i}} \right) = \left( {{\lambda _i} + \delta {\lambda _i}} \right)\left( {{u_i} + \delta {u_i}} \right),
\end{equation}
where a perturbation $\delta L$ that includes a new edge connection  is applied to $L$, resulting in perturbed eigenvalues and eigenvectors  ${\lambda _i} + \delta {\lambda _i}$ and ${u_i} + \delta {u_i}$ for $i=1,...,N$, respectively. Keeping only the first-order terms leads to:
\begin{equation}\label{formula_eig_perturb1_first_order}
 {L}\delta {u_i} + {\delta L}{u_i} = {{\lambda _i}{\delta {u_i}} + \delta {\lambda _i}}  {{u_i} }.
\end{equation}
Write $\delta u_i$ in terms of the original eigenvectors $u_i$ for for $i=1,...,N$:
\begin{equation}\label{delta u_i}
{\delta {u_i}} = \sum\limits_{i = 1}^{N} {{\alpha _i}{u_i}}.
\end{equation}
Substituting (\ref{delta u_i}) into (\ref{formula_eig_perturb1_first_order}) leads to:
\begin{equation}\label{formula_eig_perturb1_first_order_expand}
 {L}\sum\limits_{i = 1}^{N} {{\alpha _i}{u_i}} + {\delta L}{u_i} = {{\lambda _i}\sum\limits_{i = 1}^{N} {{\alpha _i}{u_i}} + \delta {\lambda _i}}  {{u_i} }.
\end{equation}
Multiplying ${u_i^\top}$ to both sides of (\ref{formula_eig_perturb1_first_order_expand}) leads to:
\begin{equation}\label{formula_eig_perturb1_conclusion}
 \delta {\lambda _i} = \delta w_{s,t}\left( {{{u_i^\top}e_{s,t}} } \right)^2.
\end{equation}
\end{proof}

For a connected graph, we  construct the following  subspace matrix for spectral graph embedding with the first $r-1$  nontrivial  Laplacian eigenvectors
\begin{equation}\label{subspace}
U_r=\left[\frac{u_2}{\sqrt {\lambda_2 +1/\sigma^2}},..., \frac{u_r}{\sqrt {\lambda_r +1/\sigma^2}}\right]. 
\end{equation}
According to  Theorem \ref{thm:pertub},  (\ref{optF}) can be approximated as: 
\begin{equation}\label{optF2}
 s_{s,t}=\frac{\partial F}{\partial w_{s,t} } \approx \|U_r^\top e_{s,t}\|_2^2
-  \frac{1}{M}\|X^\top e_{s,t}\|_2^2=z_{s,t}^{emb}-\frac{1}{M}z_{s,t}^{data},
\end{equation}
where $z_{s,t}^{emb}=\|U_r^\top e_{s,t}\|^2_2$ and $z_{s,t}^{data}={\|X^\top e_{s,t}\|^2_2}{}$ denote the $\ell_2$ distances in the spectral embedding space and the    data (voltage measurement) vector space, respectively.  The partial derivative ($ s_{s,t}$) in (\ref{optF2}) can   be regarded as each  edge's sensitivity that can be leveraged for solving the optimization task in (\ref{opt2}) using gradient based methods, such as the general stagewise algorithm for  group-structured learning \cite{tibshirani2015general}.

\subsection{Convergence Analysis of the SGL Algorithm}
If we define the {spectral embedding distortion} $\eta_{s,t}$ of an edge $(s, t)$  to be: 
 \begin{equation}\label{embedDist}
 \eta_{s,t}=M\frac{ z_{s,t}^{emb}}{z_{s,t}^{data}}.
 \end{equation}
Since $z_{s,t}^{emb}$ is equivalent to  the effective resistance $R^\textit{eff}_{s,t}$   on the graph  when $\sigma^2\rightarrow +\infty$   and  $r\rightarrow N$, we can rewrite the spectral embedding distortion as
 \begin{equation}\label{embedDist2}
 \eta_{s,t}=w_{s,t}R^\textit{eff}_{s,t},
 \end{equation}
where  ${w_{s,t}}=\frac{M}{z_{s,t}^{data}}$.  (\ref{embedDist2}) implies that $\eta_{s,t}$ becomes the edge leverage score for spectral graph sparsification \cite{spielman2011graph}. Prior work shows that    every undirected graph has  a nearly-linear-sized spectral sparsifier with $O (N\log N)$ edges which can be obtained by sampling each edge with a probability proportional to its edge leverage score \cite{spielman2011graph}; on the other hand, the  proposed  SGL graph learning framework can be regarded as a \textbf{spectral graph densification} procedure that aims to  construct a graph with   $O (N\log N)$  edges such that the   spectral embedding (effective-resistance) distances will   encode the $\ell_2$ distances between the original data points (voltage measurements).   The global (maximum) optimal solution of (\ref{opt2}) can be obtained when the maximum edge sensitivity ($s_{max}$) in (\ref{optF2}) becomes zero  or equivalently when the maximum  spectral embedding distortion  ($\eta_{max}$) in (\ref{embedDist})  becomes one. 

\subsection{Sample Complexity of the SGL Algorithm}
We analyze the required number of voltage vectors (measurements) for accurate graph learning via the SGL approach. Assume that $\sigma^2\rightarrow +\infty$. Denote the ground-truth graph by $G_*$,  and define its edge weight matrix  $W_*$ to be a diagonal matrix with $W_*(i,i)=w_i$, and  its injection matrix as:
\begin{equation}
B_*(i,p)=\begin{cases}
1 & \text{ if } p \text{ is  i-th edge's    head}\\
-1 & \text{ if } p \text{ is   i-th edge's tail} \\
0 & \text{ otherwise }. 
\end{cases}
\end{equation}  
Then the Laplacian matrix of the ground-truth graph $G_*$ in (\ref{LaplacianEdge}) can also be written as $L_*=B_*^\top W_* B_*$. Consequently, the effective resistance $R_*^\textit{eff}({s,t})$ between nodes $s$ and $t$ becomes:
\begin{equation}\label{effRes2}
R_*^\textit{eff}({s,t})=e^\top_{s,t}L_*^+e_{s,t}=\|W_*^{\frac{1}{2}} B_*L_*^{+}e_{s,t}\|_2^2,
\end{equation}
where $L^+_*$ denotes the Moore–Penrose pseudoinverse of $L_*$. According to the Johnson-Lindenstrauss Lemma, the effective-resistance distance for every pair of nodes satisfies \cite{spielman2011graph}:
\begin{equation}\label{effResJL}
(1-\epsilon)R_*^\textit{eff}({s,t})\le\|X^\top e_{s,t}\|_2^2 \le (1+\epsilon)R_*^\textit{eff}({s,t}),
\end{equation} 
where  the voltage measurement matrix $X\in \mathbb{R}^{N\times M}$ is constructed by going through the following steps:
\begin{enumerate}
\item Let $C$ be a random $\pm \frac{1}{\sqrt{M}}$ matrix of dimension $M\times |E|$, where $|E|$ denotes the number of edges and $M=24 \log \frac{N}{\epsilon^2}$ denotes the number of voltage measurements;
\item Obtain   $Y=CW_*^{\frac{1}{2}}B_*$, with the $i$-th row vector denoted by $y^\top_i$; 
\item Solve $L_* x_i= y_i$ for all rows in $C$ ($1 \le i\le M$), and construct $X$ using $x_i$ as its $i$-th column vector.
\end{enumerate}
Consequently, given  $M\ge O(\log {N}/{\epsilon^2})$ voltage vectors (measurements) obtained through the above procedure, a $(1\pm \epsilon)$-approximate effective-resistance distance can be computed by $\tilde R_*^\textit{eff}(s,t)=\|X^\top e_{s,t}\|_2^2$ for any pair of nodes $(s,t)$ in the original graph $G_*$. Consider the following close connection between  effective resistances and  spectral graph properties:
\begin{equation}\label{formula_Reff}
R^\textit{eff}_{s,t}=\|U_N^\top e_{s,t}\|^2_2,~\text{where~} U_N=\left[\frac{u_2}{\sqrt {\lambda_2}},..., \frac{u_N}{\sqrt {\lambda_N}}\right].
\end{equation}
Consequently,  using $O(\log {N})$ measurements (sample voltage vectors) would be sufficient for  SGL  to learn an $N$-node graph for well preserving the original graph spectral  properties.

\subsection{Key Steps in the SGL Algorithm}\label{sec:overview}
To achieve good efficiency in graph learning that may involve a large number of nodes,  the proposed SGL algorithm   can iteratively identify and include the most influential edges  into the latest graph until no such edges can be found, through the following key steps. 

 \subsubsection{Step 1: Initial Graph Construction}
(\ref{optF2}) implies that by iteratively identifying and adding the most influential edges (with the highest sensitivities) into the latest graph,   the graph spectral embedding (or effective-resistance) distance will encode the   $\ell_2$ distances between the  original data vector space (averaged among $M$ measurements). To gain faster convergence of SGL,   sparsified $k$-nearest-neighbor (kNN) graphs \cite{malkov2018efficient} can be  leveraged as the initial graphs. However, choosing an optimal $k$ value (the number of nearest neighbors) for constructing the kNN graph can still be challenging for general graph learning tasks:  choosing a too large $k$   allows well approximating the global structure of the manifold for the original data points (voltage measurements), but will  result in a rather dense graph;  choosing a too small $k$ may lead to many small isolated graphs, which may  slow down the    iterations.  

Since circuit networks   are typically   very sparse (e.g. 2D or 3D meshes) in nature, the voltage or current measurements (vectors) usually lie near  low-dimensional manifolds, which allows finding a proper $k$ for our graph learning tasks. To achieve a good trade-off between   complexity and quality, in SGL    the initial graph will  be set up through the following steps: \textbf{(1)} Construct a connected kNN graph with a relatively small $k$ value (e.g. $5\le k \le 10$), which will suffice for approximating the global manifold corresponding to the original   measurement data; \textbf{(2)} Sparsify the kNN graph by extracting a  maximum spanning tree (MST)   that will serve as the initial graph. Later, SGL will   gradually  improve the graph by iteratively including the most influential off-tree edges selected from  the    kNN graph until convergence. 
  
 \subsubsection{Step 2: Spectral Graph Embedding}
Spectral graph embedding directly leverages the first few nontrivial eigenvectors  for mapping nodes onto low-dimensional space \cite{belkin2003laplacian}. The eigenvalue decomposition of  Laplacian matrix  is usually the computational bottleneck in spectral graph embedding, especially for large graphs. To achieve good scalability,  we can exploit fast multilevel eigensolvers  that allow computing the first few Laplacian eigenvectors  in nearly-linear time without loss of accuracy \cite{zhao2021towards}. 

 \subsubsection{Step 3: Influential Edge Identification}
Once the first few Laplacian eigenvectors are available, we can efficiently identify the most influential off-tree edges by looking at each candidate edge's sensitivity score  defined in (\ref{optF2}). In the proposed SGL approach,  each candidate off-tree edge  (in the kNN graph) will be sorted according to its edge sensitivity. Only a few most influential edges that have the largest sensitivities computed by (\ref{optF2}) will be included into the latest graph. Note that when   $r\ll N$, the following inequality  holds for any edge $(s, t)$: 
\begin{equation}\label{distBound}
 \|U_r^\top e_{s,t}\|^2_2=z_{s,t}^{emb} < \|U_N^\top e_{s,t}\|^2_2 \le R^\textit{eff}({s,t}),
\end{equation}
 implying that the sensitivities ($s_{s,t}$) computed using the first $r$ eigenvectors will always be   smaller than the actual ones. Obviously, using more eigenvectors for spectral   embedding   will lead to more accurate estimation of edge sensitivities. For typical circuit networks,    sensitivities computed using a small number (e.g. $r<5$) of   eigenvectors will suffice for identifying the most influential edges. 
 \subsubsection{Step 4: Convergence Checking}
  In this work, we propose to exploit the maximum edge sensitivities   computed by (\ref{optF2}) for checking the convergence of SGL iterations. If there exists no additional edge that has a sensitivity greater than a given threshold ($s_{max}\ge tol$), the SGL iterations can be terminated. It should be noted that choosing different  tolerance ($tol$) levels  will result in  graphs with  different densities. For example, choosing a smaller threshold   will require more edges   to be included so that the resultant spectral embedding distances on the learned graph can more precisely encode the distances between the original data points.  
   \subsubsection{Step 5:  Spectral Edge Scaling}
 Assume that $\sigma^2$ in (\ref{subspace}) approaches $+\infty$ and the normalized input  right-hand-side (current) vectors ($Y=[y_1, ..., y_M]$) corresponding to the $M$ voltage measurements ($X=[x_1, ..., x_M]$)   are orthogonal to the all-one vector.  Then for each original voltage vector $x_i$ and its corresponding current vector $y_i$ we have:
   \begin{equation}\label{scaling}
\|x_i\|^2_2=y_i^\top  (L^+_*)^2 y_i, ~~\textit{for}~~i=1,...,M.
\end{equation}
 Next, for each  $y_i$ we   compute the   voltage vector $\tilde x_i$  using the estimated Laplacian $L$   obtained via SGL iterations:
 \begin{equation}\label{scaling2}
L \tilde x_i= y_{i} => \|\tilde x_i\|^2_2=y_i^\top  (L^+)^2 y_i,~~\textit{for}~~i=1,...,M.
\end{equation}
To more precisely match the original graph spectral properties,  each edge weight can be adjusted as follows:
   \begin{equation}\label{scaling3}
w_{s,t}=\tilde w_{s,t} *\sqrt{\frac{1}{M}\sum\limits_{i=1}^M \frac{\|\tilde x_i\|^2_2}{\| x_i\|^2_2}},
\end{equation}
where $\tilde w_{s,t}$ denotes the initial edge weight obtained via  the previous SGL iterations. Since solving the ultra-sparse Laplacian matrix $L$ can be accomplished in nearly linear time \cite{miller:2010focs,zhiqiang:iccad17}, the proposed scaling scheme is highly efficient.
 
\subsection{Algorithm  Flow and Complexity }\label{main:complexity}
  The detailed SGL algorithm flow  has been shown in Algorithm \ref{alg:sgl}. All the aforementioned   steps in SGL can be accomplished in nearly-linear time by leveraging recent high-performance algorithms for   kNN graph construction \cite{malkov2018efficient},   spectral graph embedding   for influential edge identification \cite{zhao:dac19,zhao2021towards}, and fast Laplacian solver for edge scaling \cite{miller:2010focs,zhiqiang:iccad17}. Consequently, each SGL iteration can be accomplished in  nearly-linear time, whereas the state-of-the-art methods require at least $O(N^2)$ time \cite{dong2019learning}.  
  
\begin{algorithm}
 { \caption{The SGL Algorithm Flow} \label{alg:sgl}
\textbf{Input:} The voltage measurement matrix $X \in {\mathbb{R} ^{N\times M}}$, input current measurement matrix $Y \in {\mathbb{R} ^{N\times M}}$, $k$ for initial kNN graph construction, $r$ for constructing the  projection matrix in (\ref{subspace}), the maximum edge sensitivity tolerance ($tol$),  and the edge sampling ratio ($0 < \beta \le 1$). ~~~\textbf{Output:} The learned graph   $G=(V, E, w)$.\\

\begin{algorithmic}[1]
    \STATE {Construct a   kNN graph $G_{o}=(V, E_o, w_o)$ based on $X$.}
     \STATE {Extract an MST subgraph $T$ from $G_{o}$. }
    \STATE {Assign $G=T=(V, E, w)$ as the initial graph. }
     \WHILE{$s_{max} \ge tol$}
     \STATE{Compute the projection matrix $U_r$ with (\ref{subspace}) for the latest graph $G$.}
     \STATE{Sort  off-tree edges $(s,t)\in E_o\setminus E$  according to their sensitivities computed by $s_{s,t}=\frac{\partial F}{\partial w_{s,t} }$ using (\ref{optF2}).}
     \STATE{Include  an off-tree edge $(s, t)$ into $G$ if its  $s_{s, t}>tol$ and it has been ranked among the top   $\ceil{ N{\beta}}$   edges. }
       \STATE{Record the maximum edge sensitivity $s_{max}$. }
     \ENDWHILE
     \STATE{Do spectral edge  scaling  using $X$ and $Y$  via (\ref{scaling})-(\ref{scaling3});}
    \STATE {Return the learned graph $G$.}
\end{algorithmic}
}
\end{algorithm}
\vspace{-0pt}
\section{Experimental Results}\label{result_sec}
The proposed SGL algorithm has been implemented in Matlab. The test cases  in this paper have been selected from a great variety of matrices that have been used in circuit simulation and finite element analysis problems.  Since the prior state-of-the-art graph learning algorithms \cite{dong2019learning} have been developed based on a standard convex solver \cite{grant2009cvx}, the runtime would be excessively long (over many thousands of seconds) even for the smallest test case ($|V|=4,253$) reported in this paper. Therefore, we will only compare with the graph construction method based on the standard kNN algorithm. All of our experiments have been conducted using a single CPU core of a computing system with a $3.4$ GHz Quad-Core Intel Core i5 CPU and $24$ GB memory.
\vspace{-0pt}

\subsection{Experimental Setup}
To generate the voltage and current measurement samples, the following procedure has  been applied: (1) we first randomly generate $M$ current source vectors with each element sampled using a standard normal distribution; (2) each current vector will be normalized and orthogonal to the all-one vector; (3) $M$ voltage vector measurements will be obtained by solving the original graph Laplacian matrix  with the $M$ current vectors as the (right-hand-side) input vectors; (4)  the voltage and current vectors will be stored in matrices $X=[x_1,...,x_M]$ and $Y=[y_1,...,y_M] \in {\mathbb{R} ^{N\times M}}$, respectively, which will be used as the input data of the proposed SGL algorithm. By default, $M=50$ is used for generating the voltage and current measurements. We choose $k=5$ for constructing the kNN graph for all test cases. We set $r=5$ for constructing the  projection matrix in (\ref{subspace}). The edge sampling ratio  $\beta=10^{-3}$  has been used. The SGL iterations will be terminated if   $s_{max}<tol=10^{-12}$. When approximately computing the objective function value (\ref{opt2}), the first $50$ nonzero Laplacian eigenvalues are used. 

To clearly visualize each graph, the spectral graph drawing technique has been adopted \cite{koren2003spectral}: when creating the 2D   graph   layouts, each entry of the first two nontrivial Laplacian eigenvectors ($u_2, u_3$) corresponds to   the $x$ and $y$   coordinates  of each node (data point), respectively. We assign  the nodes with the same   color if they belong to the same node cluster determined by spectral graph clustering \cite{zhao2018nearly}. 
\vspace{-0pt}

\subsection{Comprehensive Results for Graph Learning} 
 

  \paragraph{Algorithm Convergence} As shown in Figure \ref{fig:distRes}, for the ``2D mesh" graph ($|V|=10,000$, $|E|=20,000$) learning task, SGL  requires about $40$ iterations to converge to $s_{max}\le 10^{-12}$ when starting from an initial  MST  of a 5NN graph.  
  \begin{figure}
     \centering\includegraphics[width=0.95\linewidth]{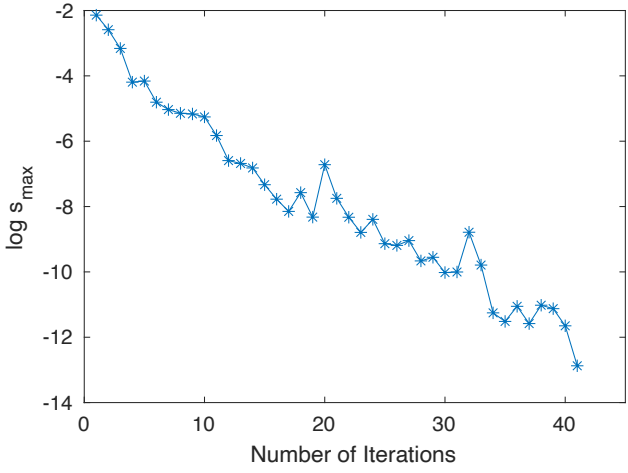}
  \caption{The decreasing maximum sensitivities (``2D mesh" graph) }\label{fig:distRes}
  \end{figure}
  
    \begin{figure}
   \centering\includegraphics[width=0.985\linewidth]{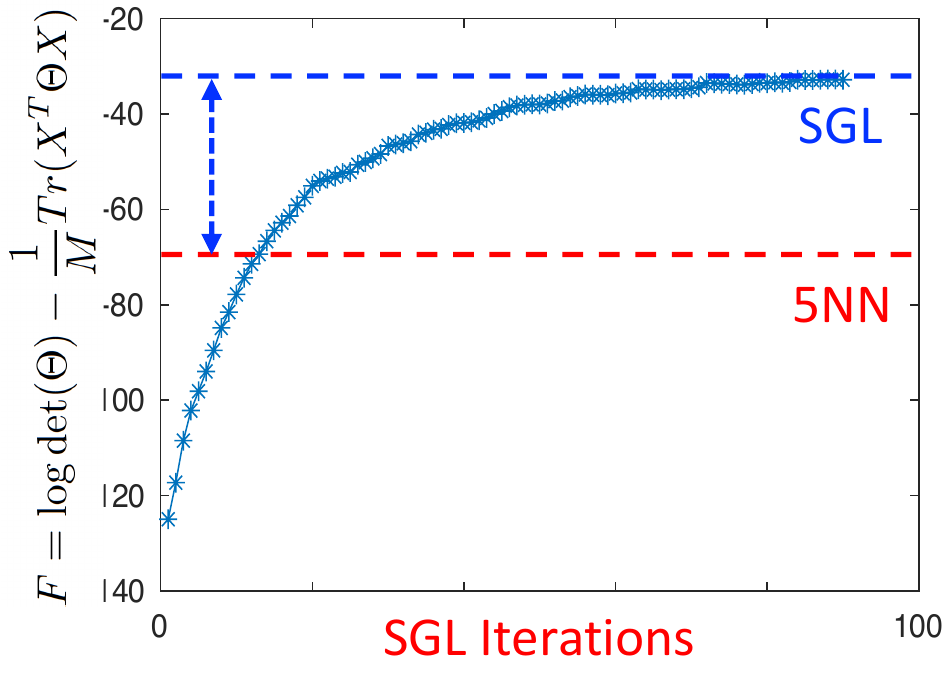}
  \caption{The objective function values (``fe\_4elt2" graph) }\label{fig:compKNN}
  \end{figure}

   \paragraph{Comparison with kNN Graph} As shown in Figure \ref{fig:compKNN}, for the ``fe\_4elt2" graph ($|V|=11,143$, $|E|=32,818$) learning task, SGL  converges in about  $90$ iterations when starting from an initial  MST of a 5NN graph. For the 5NN graph, we do the same edge   scaling     using (\ref{scaling})-(\ref{scaling3}). As shown in Figures \ref{fig:compKNN} and  \ref{fig:compKNN2}, the SGL-learned graph achieves a more optimal objective function value  and a much better  spectral approximation   than the 5NN graph. As observed, the SGL-learned graph has a density similar to a spanning tree, which is much sparser than the 5NN graph.
   \begin{figure}[!htb]
  \centering\includegraphics[width=0.995\linewidth]{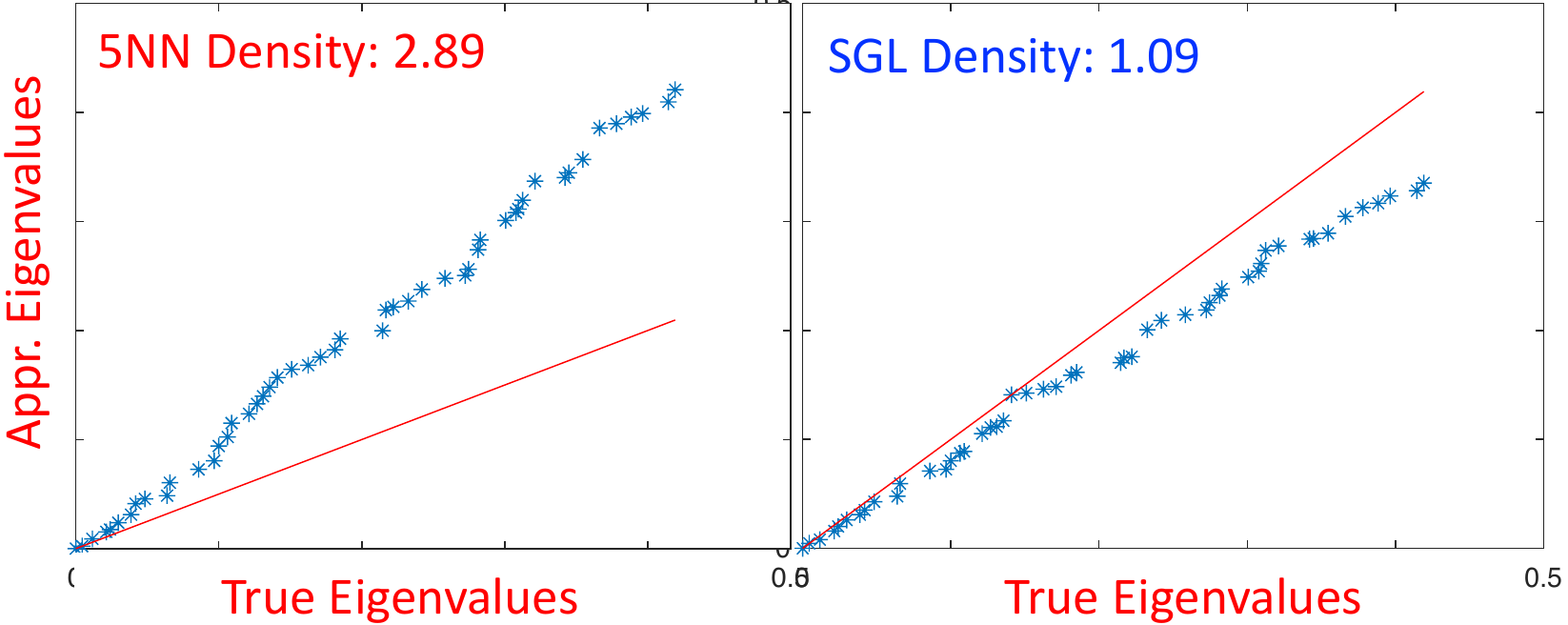}
  \caption{The comparison with a 5NN graph (``fe\_4elt2" graph)}\label{fig:compKNN2}
  \end{figure}
  
\paragraph{Learning Circuit Networks} As shown in Figures \ref{fig:airfoil} and \ref{fig:G2}  for the ``airfoil" ($|V|=4,253$, $|E|=12,289$), the ``crack" ($|V|=10,240$, $|E|=30,380$),  
and  the ``G2\_circuit" ($|V|=150,102$, $|E|=288,286$) graphs,   SGL   can consistently learn    ultra-sparse graphs which are slightly denser than  spanning trees while preserving the key graph spectral  properties. In Figure \ref{fig:resistance}, we observe highly correlated results when comparing the  effective resistances computed on the original  graphs with the  graphs learned by SGL.

\paragraph{Learning Reduced Networks} As shown in Figure \ref{fig:G2Redu}, by randomly choosing a small portion of node voltage measurements (without using any currents measurements in $Y$ matrix) for graph learning, SGL can learn  spectrally-similar graphs of much smaller sizes: when $20\%$ and  $10\%$  node voltage measurements are used for graph learning, $5\times$ and $10\times$ smaller resistor networks can be constructed, respectively, while preserving the key spectral (structural) properties of the original graph.

  \begin{figure}[!htb]
  \includegraphics[width=0.9995\linewidth]{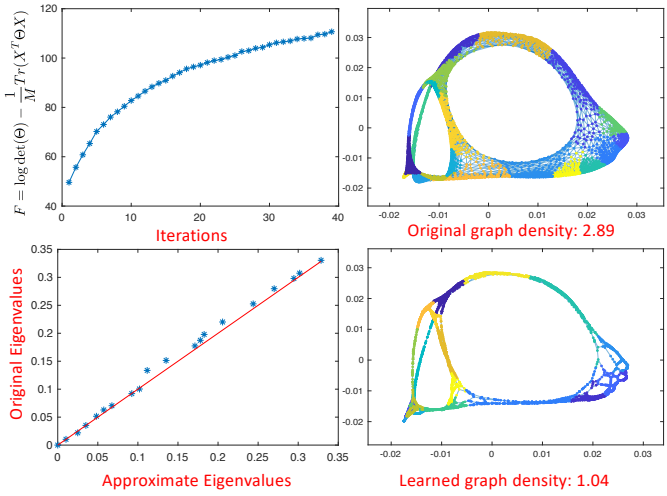}
  \caption{The results for learning the ``airfoil" graph}\label{fig:airfoil}
  \end{figure}
  
    \begin{figure}[!htb]
  \includegraphics[width=0.995\linewidth]{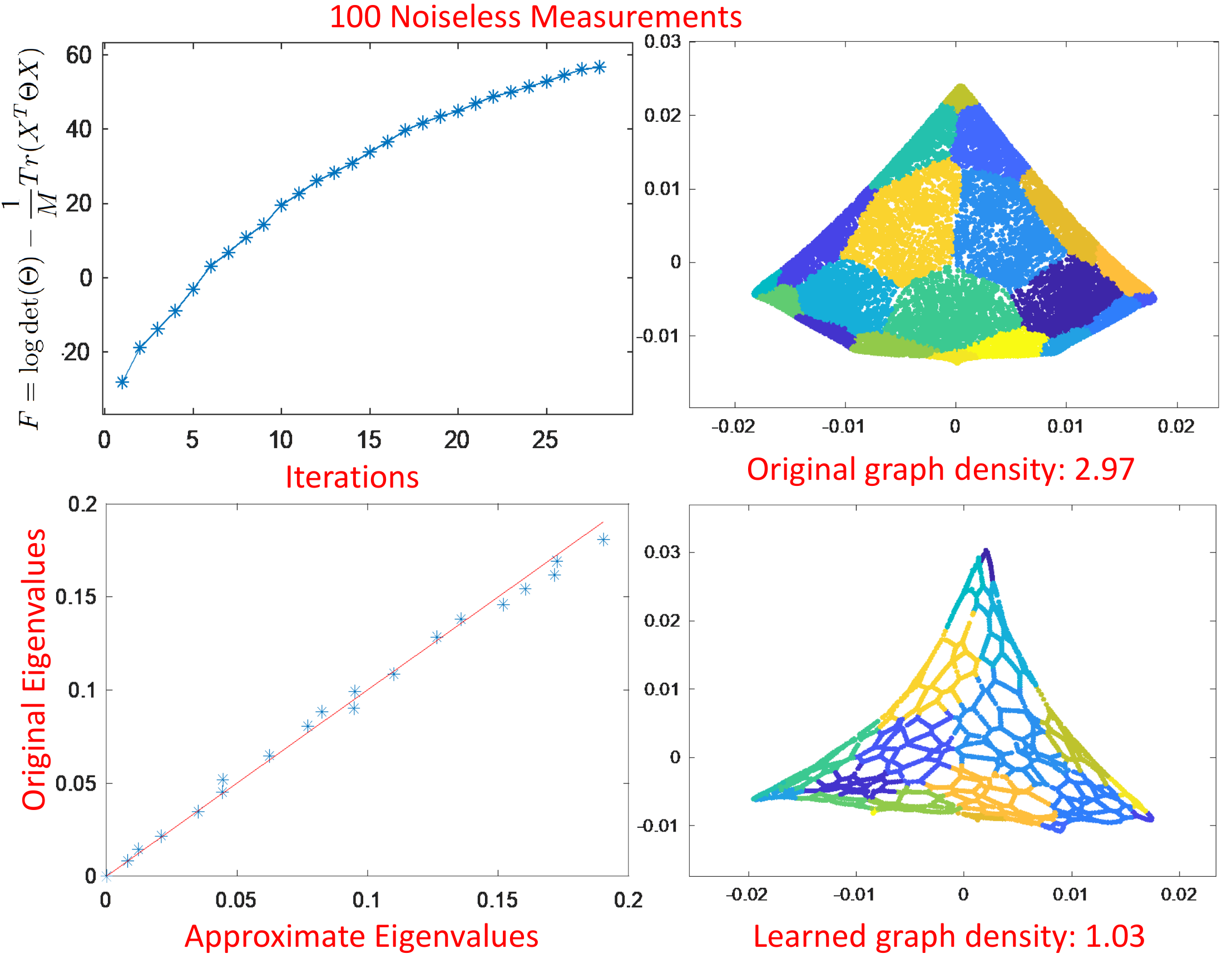}
  \caption{The results for learning the ``crack" graph}\label{fig:crack}
  \end{figure}
  
   \begin{figure}[!htb]
  \includegraphics[width=0.9995\linewidth]{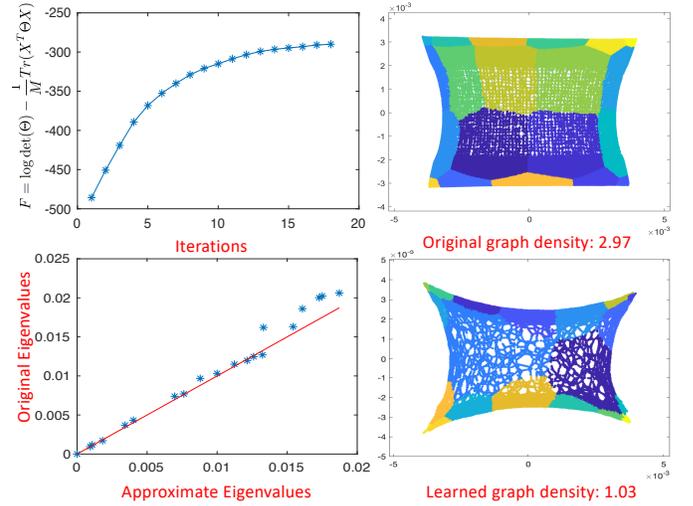}
  \caption{The results for learning the ``G2\_circuit" graph}\label{fig:G2}
  \end{figure}

    \begin{figure}[!htb]
  \includegraphics[width=0.9995\linewidth]{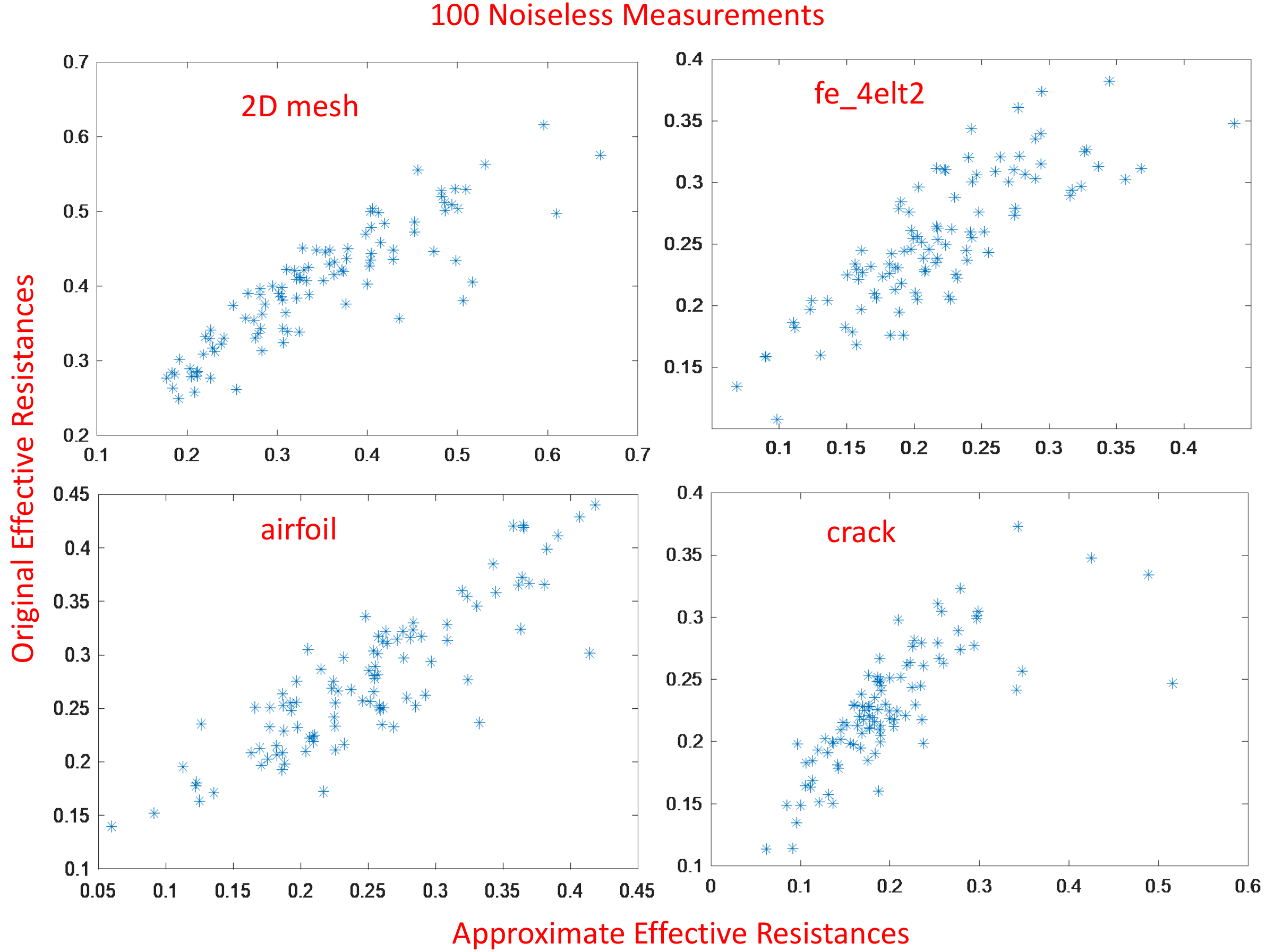}
  \caption{The effective resistances correlations (scatter plots) }\label{fig:resistance}
  \end{figure}
  
  \begin{figure}[!htb]
  \includegraphics[width=0.999\linewidth]{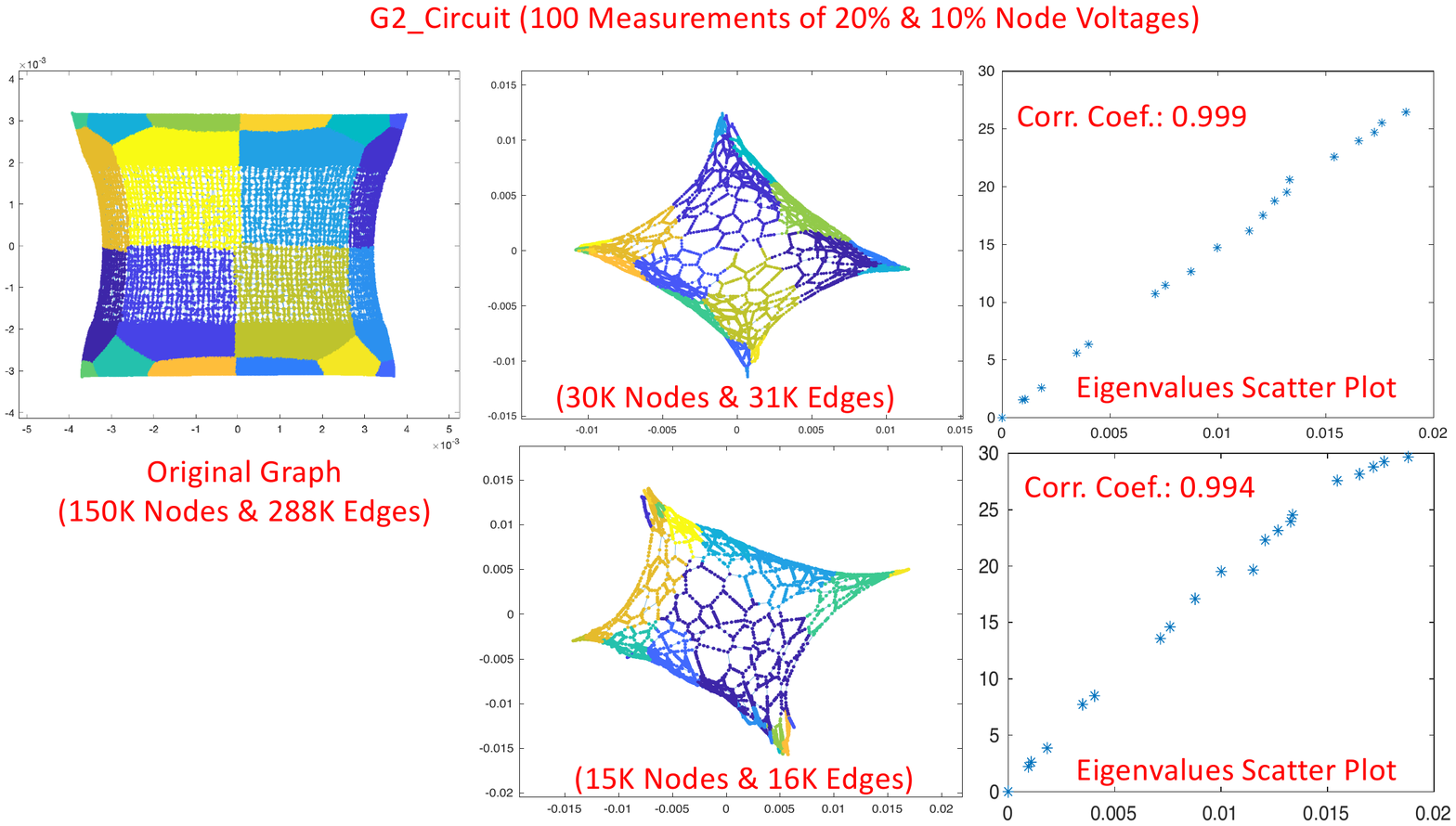}
  \caption{The reduced graphs learned by SGL (``G2\_circuit" )}\label{fig:G2Redu}
  \end{figure}
  
\paragraph{Learning with Noisy Measurements} We show the results of the ``2D mesh" graph learning  with noisy voltage measurements. For each SGL graph learning task, each input voltage measurement (vector) $\tilde x$ will be computed by: $\tilde x= x+ \zeta\| x\|_2\epsilon$, where $\epsilon$ denotes a normalized Gaussian noise vector, and $\zeta$ denotes the noise level. As shown in Figure \ref{fig:MeshNoise}, the increasing  noise levels will result in worse approximations of the original   spectral properties. It is also observed that even with a very significant noise level of $\zeta=0.5$, the graph learned by the proposed SGL algorithm can still preserve the first few Laplacian eigenvalues that are key to the graph structural (global) properties.

   \begin{figure}[!htb]
  \includegraphics[width=0.995\linewidth]{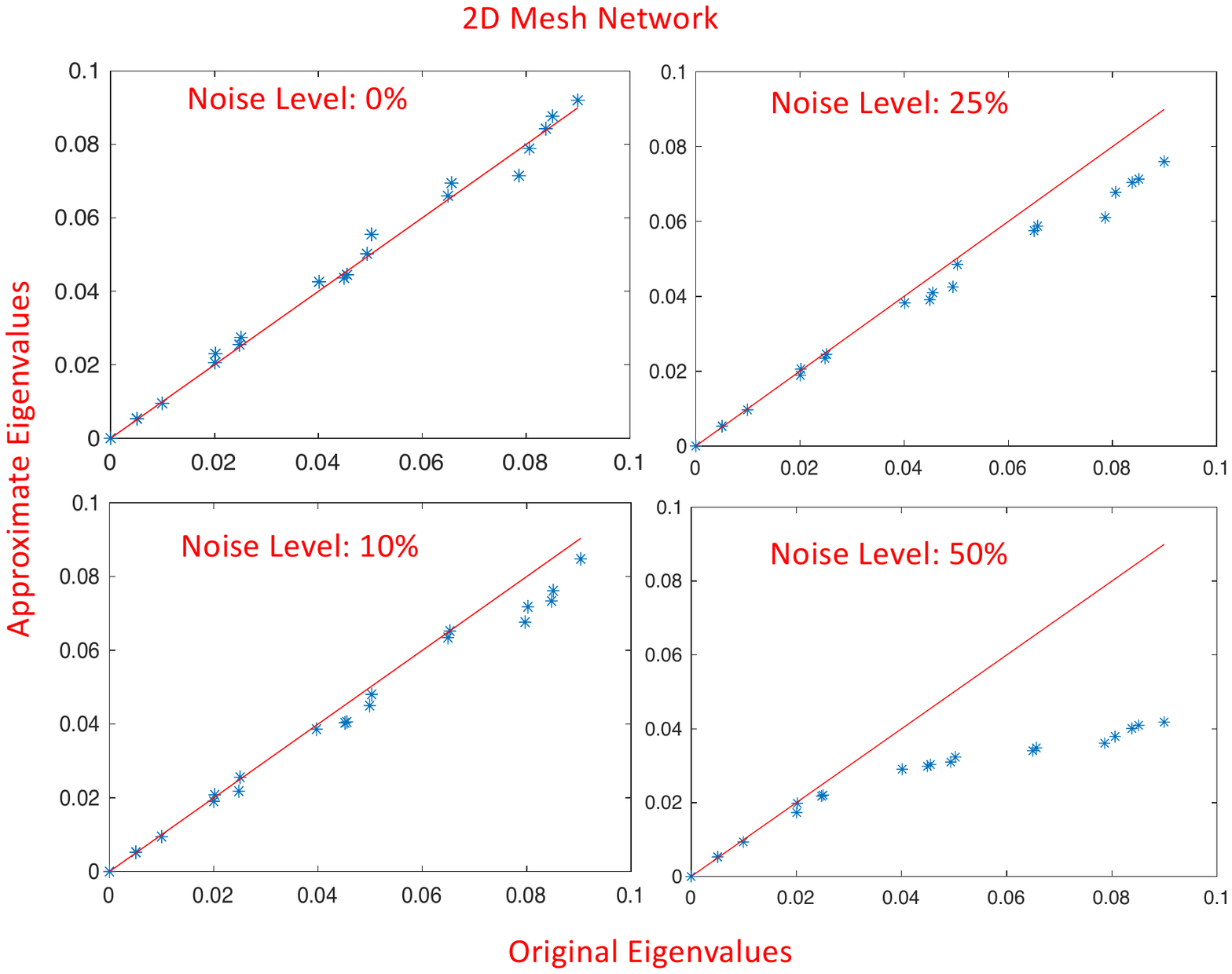}
  \caption{The graphs learned with noises (``2D mesh" graph)}\label{fig:MeshNoise}
  \end{figure}
  \begin{figure}[!htb]
  \includegraphics[width=0.9995\linewidth]{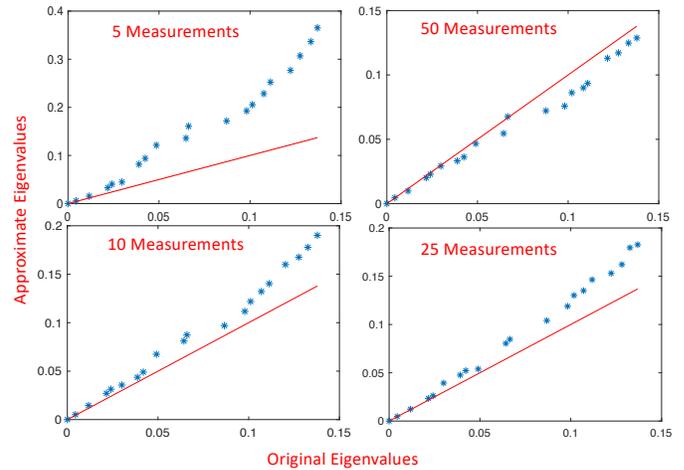}
  \caption{The effect of the number of measurements (``fe\_4elt2" graph) }\label{fig:sampleComplexity}
  \end{figure}
     \begin{figure}[!htb]
   \centering\includegraphics[width=0.9835\linewidth]{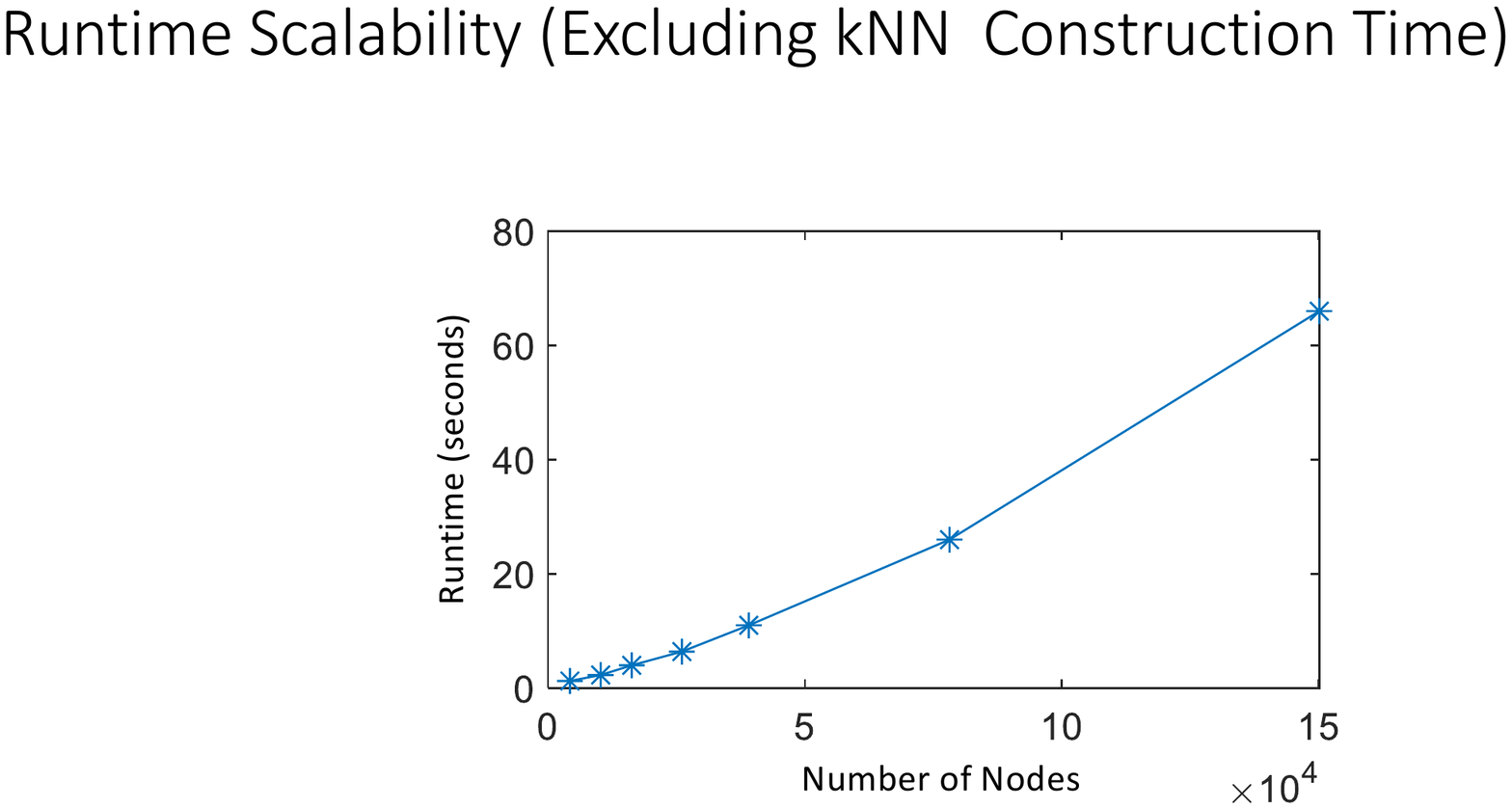}
    \vspace{-0pt}
  \caption{The runtime scalability of the SGL algorithm}\label{fig:runtime}
  \vspace{-5pt}
  \end{figure}
  \paragraph{Sample Complexity and Runtime Scalability} Figure \ref{fig:sampleComplexity} shows how the sample complexity (number of measurements)
may impact the graph learning quality. As observed, with increasing number of samples (measurements), substantially improved approximation of the  graph spectral    properties can be achieved. In the last, we show the runtime scalability of the proposed SGL algorithm. The runtime includes the total time of Step 2 to Step 5 but does not include the time for Step 1. Note that  modern kNN algorithms can achieve highly scalable runtime performance \cite{malkov2018efficient}.

  
\section{Conclusions}\label{conclusion}
 This work proposes a spectral algorithm (SGL) for learning resistor networks from linear voltage and current measurements.  
Our approach  iteratively identifies and includes the most influential edges to the latest graph. We show that the proposed graph learning approach is equivalent to solving the classical graphical Lasso problems with Laplacian-like precision matrices. A    unique feature of SGL is that the learned graphs will have  {  spectral embedding or effective-resistance  distances  encoding the similarities} between the original input data points (node voltages).  To  achieve high efficiency, SGL exploits a scalable spectral     embedding scheme    to allow  each iteration to be completed in $O(N \log N)$ time, whereas existing state-of-the-art methods   require at least $O(N^2)$ time for each iteration. We also provide  a sample complexity analysis  showing that  it is possible to   accurately recover a resistor network with only $O(\log N)$ voltage measurements (vectors).
  \vspace{-0pt}



\section{Acknowledgments}
This work is supported in part by  the National Science Foundation under Grants  CCF-2041519 (CAREER),   CCF-2021309 (SHF), and CCF-2011412 (SHF).

 \bibliographystyle{abbrv}
{
\bibliography{dac21,iclr20,feng}  
}

\end{document}